\documentclass{article}

 \usepackage[preprint]{neurips_2026}

\usepackage{amsmath,amssymb,amsthm}
\usepackage{algorithm,algorithmic}
\usepackage{graphicx}
\usepackage{booktabs}
\usepackage{hyperref}
\usepackage{tikz}
\usepackage{pgfplots}
\pgfplotsset{compat=1.18}
\usetikzlibrary{patterns,fillbetween,arrows.meta,positioning,fit,backgrounds,calc,shapes.geometric}
\usepackage{colortbl}

\definecolor{routercolor}{HTML}{7B68A0}
\definecolor{routerfill}{HTML}{F3F0F8}
\definecolor{ctcolor}{HTML}{D4930A}
\definecolor{ctfill}{HTML}{FDF6E7}
\definecolor{epcolor}{HTML}{2E6DAD}
\definecolor{epfill}{HTML}{EBF2FA}
\definecolor{semcolor}{HTML}{27855E}
\definecolor{semfill}{HTML}{EAF5EF}
\definecolor{conscolor}{HTML}{C0392B}
\definecolor{consfill}{HTML}{FDF0EF}
\definecolor{graybox}{HTML}{F7F7F7}
\definecolor{grayborder}{HTML}{555555}
\definecolor{darkct}{HTML}{9A6C00}
\definecolor{darkep}{HTML}{1A4F8B}
\definecolor{darksem}{HTML}{1A6040}

\newtheorem{theorem}{Theorem}

\newtheorem{corollary}{Corollary}
\newtheorem{definition}{Definition}

\newcommand{\ours}{CRAM}

\newcommand{\E}{\mathbb{E}}

\title{Learning to Forget Attention: Memory Consolidation \\for Adaptive Compute Reduction}

\author{%
  Ibne Farabi Shihab\thanks{Equal contribution.} \\
  Department of Computer Science \\
  Iowa State University \\
  \texttt{ishihab@iastate.edu} \\
  \And
  Sanjeda Akter\footnotemark[1] \\
  Department of Computer Science \\
  Iowa State University \\
  \texttt{sanjeda@iastate.edu} \\
  \And
  Anuj Sharma \\
  Department of Civil, Construction and Environmental Engineering \\
  Iowa State University \\
  \texttt{anujs@iastate.edu} \\
}
\begin{document}

\maketitle

\begin{abstract}
Hybrid architectures combining state-space models with attention have achieved strong efficiency-quality tradeoffs, yet existing approaches either apply attention uniformly or learn static sparse patterns. This misses a key opportunity: \emph{attention demand should decrease over time as recurring patterns become familiar}. We present a surprising finding from analyzing GPT-2 models: \textbf{88\%} of attention operations retrieve information already predictable from the model's hidden state, and this redundancy does \emph{not} decrease during training. Motivated by this observation, we introduce \textbf{\ours{}} (\textbf{C}onsolidation-based \textbf{R}outing for \textbf{A}daptive \textbf{M}emory), a biologically inspired memory consolidation mechanism that gradually distills episodic retrievals into parametric semantic memory. Unlike prior sparse attention methods, \ours{} exhibits \emph{decreasing attention utilization} over training, achieving a \textbf{37.8$\times$} reduction through a sharp phase transition at approximately 3K steps. We prove that this capability is \emph{impossible} without consolidation: any static routing scheme requires $\Omega(f \cdot n)$ attention for tasks with recurring patterns of frequency $f$. On our proposed SRCD benchmark, \ours{} achieves \textbf{100\% retrieval accuracy} at 1.6\% attention compute (vs.\ 68\% for baselines), and consolidated patterns transfer to unseen tasks with \textbf{48--52\%} attention reduction without retraining. Remarkably, the learned consolidation dynamics quantitatively match human episodic-to-semantic memory transition curves from cognitive psychology ($\gamma = 0.43$ vs.\ $\gamma_{\text{human}} \approx 0.4$--$0.5$). Code and benchmarks are available at [added later].
\end{abstract}

\section{Introduction}

The efficiency-expressivity tradeoff in sequence modeling has driven rapid architectural innovation~\citep{tay2022efficient}. Self-attention~\citep{vaswani2017attention} provides powerful global interaction but incurs quadratic cost, motivating a long line of efficient alternatives including sparse patterns~\citep{child2019generating,beltagy2020longformer}, low-rank projections~\citep{wang2020linformer,katharopoulos2020transformers}, and locality-sensitive hashing~\citep{kitaev2020reformer}. State-space models (SSMs) such as S4~\citep{gu2022efficiently} and Mamba~\citep{gu2023mamba} achieve linear complexity but struggle with tasks requiring precise associative recall~\citep{jelassi2024repeat,dao2024mamba2}. Recent hybrid architectures, including Jamba~\citep{lieber2024jamba}, SeqBoat~\citep{ren2023sparse}, and TransMamba~\citep{li2025transmamba}, combine these mechanisms and achieve strong Pareto frontiers.

 Analyzing attention patterns in pretrained GPT-2 models~\citep{radford2019language}, we find that \textbf{88\% of attention operations retrieve information already predictable from the model's hidden state} (Section~\ref{sec:redundancy}). Moreover, this redundancy does not decrease during training, because standard objectives provide no learning signal for compute efficiency. Models learn \emph{what} to attend to, but never learn \emph{when attention is unnecessary}.

This observation exposes a fundamental limitation shared by all existing hybrids: they maintain \textbf{static compute allocation}. Whether attention is applied uniformly (Jamba), sparsely activated with fixed patterns (SeqBoat), or switched at predetermined positions (TransMamba), the model's attention budget does not adapt based on what it has already \emph{learned}. This misses a crucial insight from cognitive science: biological memory systems consolidate frequently accessed episodic memories into semantic knowledge, progressively reducing future retrieval costs~\citep{tulving1972episodic,mcclelland1995there,kumaran2016learning}.

The central thesis of this paper is that \textbf{attention demand should decrease over training and inference as recurring retrieval patterns become consolidated into fast parametric memory}. We introduce \ours{}, which implements this principle through three mechanisms:
\begin{itemize}
    \item An \textbf{episodic memory buffer} that stores high-novelty events accessed via attention.
    \item A \textbf{semantic memory adapter} trained to predict what episodic retrieval would return.
    \item A \textbf{consolidation-aware router} that bypasses attention when semantic memory is sufficiently accurate.
\end{itemize}

Figure~\ref{fig:architecture} illustrates the full architecture. Each layer routes tokens through a consolidation-aware router to one of three memory tiers. The consolidation loss trains semantic memory to approximate episodic retrieval; as the quality signal $q_t$ increases during training, the router shifts from episodic ($O(n)$) to semantic ($O(1)$) routing.

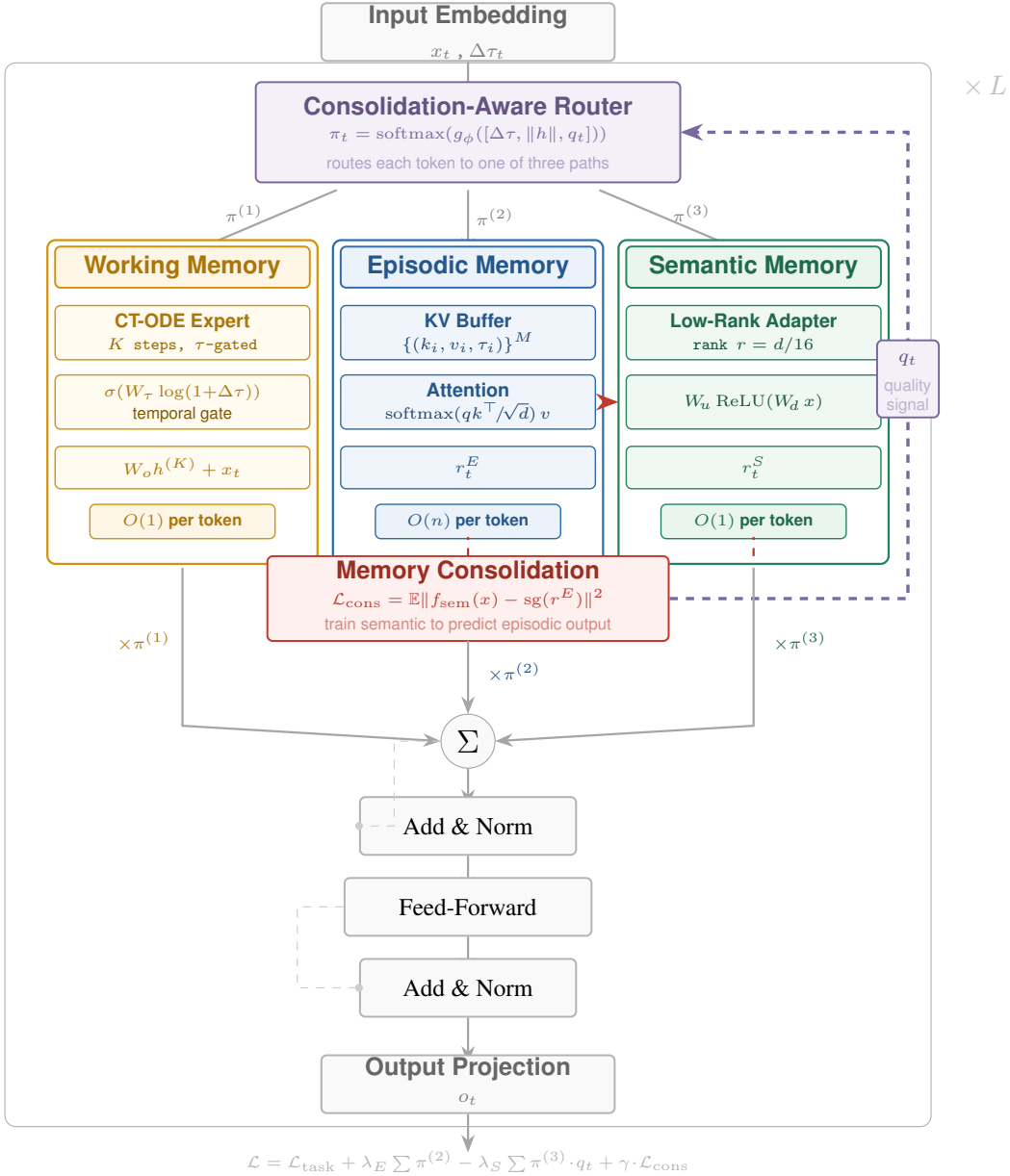
\begin{figure}[t]
\centering
\resizebox{\textwidth}{!}{%
\begin{tikzpicture}[>=Stealth,node distance=0.6cm,
  every node/.style={font=\small},
  box/.style={rectangle, draw, rounded corners=2pt, minimum height=0.75cm, line width=0.7pt},
  tier/.style={rectangle, draw, rounded corners=3pt, minimum width=3.5cm, minimum height=4.2cm, line width=0.8pt},
  tierhead/.style={rectangle, draw, rounded corners=2pt, minimum width=3.3cm, minimum height=0.5cm, line width=0.7pt, font=\sffamily\small\bfseries},
  innerbox/.style={rectangle, draw, rounded corners=2pt, minimum width=3.3cm, minimum height=0.55cm, line width=0.5pt, inner sep=1pt},
  costlabel/.style={rectangle, draw, rounded corners=2pt, minimum width=2.4cm, minimum height=0.4cm, line width=0.5pt, font=\sffamily\tiny\bfseries},
]

\colorlet{routerpurp}{routercolor}
\colorlet{workingyellow}{ctcolor}
\colorlet{workingfill}{ctfill}
\definecolor{workinginner}{HTML}{FFFBF0}
\colorlet{episodicblue}{epcolor}
\colorlet{episodicfill}{epfill}
\definecolor{episodicinner}{HTML}{F0F5FB}
\colorlet{semanticgreen}{semcolor}
\colorlet{semanticfill}{semfill}
\definecolor{semanticinner}{HTML}{F0F8F4}
\colorlet{consred}{conscolor}

\node[box, fill=gray!5, draw=gray!60, minimum width=3.8cm] (input) at (6.5,16.5) {};
\node[font=\sffamily\small\bfseries, text=gray!80!black] at (6.5,16.7) {Input Embedding};
\node[font=\ttfamily\scriptsize, text=gray] at (6.5,16.25) {$x_t$\,,\,$\Delta\tau_t$};

\node[draw=gray!60, rounded corners=4pt, minimum width=12cm, minimum height=13.8cm, inner sep=0pt] (layerblock) at (6.5,9.2) {};
\node[font=\sffamily\bfseries, text=gray!50] at (13.2,15.8) {$\times\, L$};

\draw[->, thick, gray!70] (input.south) -- ++(0,-0.55);

\node[box, fill=routerfill, draw=routerpurp, minimum width=5.5cm, minimum height=1.3cm] (router) at (6.5,15.2) {};
\node[font=\sffamily\bfseries\small, text=routerpurp!80!black] at (6.5,15.55) {Consolidation-Aware Router};
\node[font=\ttfamily\tiny, text=routerpurp] at (6.5,15.15) {$\pi_t = \mathrm{softmax}(g_\phi([\Delta\tau, \|h\|, q_t]))$};
\node[font=\sffamily\tiny, text=routerpurp!60] at (6.5,14.78) {routes each token to one of three paths};

\draw[->, thick, gray!70] (4.8,14.45) -- (2.8,13.6);
\draw[->, thick, gray!70] (6.5,14.45) -- (6.5,13.6);
\draw[->, thick, gray!70] (8.2,14.45) -- (10.2,13.6);

\node[font=\sffamily\tiny\bfseries, text=gray] at (3.6,14.15) {$\pi^{(1)}$};
\node[font=\sffamily\tiny\bfseries, text=gray] at (6.85,14.1) {$\pi^{(2)}$};
\node[font=\sffamily\tiny\bfseries, text=gray] at (9.4,14.15) {$\pi^{(3)}$};

\node[tier, fill=white, draw=workingyellow] (wm) at (2.8,11.7) {};
\node[tierhead, fill=workingfill, draw=workingyellow, text=workingyellow!80!black] at (2.8,13.45) {Working Memory};

\node[innerbox, fill=workinginner, draw=workingyellow, minimum height=0.7cm] (ct) at (2.8,12.6) {%
  \begin{tabular}{c}
  {\sffamily\scriptsize\bfseries\color{workingyellow!80!black}CT-ODE Expert}\\[-2pt]
  {\ttfamily\tiny\color{workingyellow!70!black}$K$ steps, $\tau$-gated}
  \end{tabular}};

\node[innerbox, fill=workinginner, draw=workingyellow, minimum height=0.7cm] at (2.8,11.7) {%
  \begin{tabular}{c}
  {\ttfamily\tiny\color{workingyellow!80!black}$\sigma(W_\tau \log(1{+}\Delta\tau))$}\\[-2pt]
  {\sffamily\tiny\color{workingyellow!60!black}temporal gate}
  \end{tabular}};

\node[innerbox, fill=workinginner, draw=workingyellow] at (2.8,10.85) {%
  {\ttfamily\tiny\color{workingyellow!80!black}$W_o h^{(K)} + x_t$}};

\node[costlabel, fill=workingfill, draw=workingyellow, text=workingyellow!80!black] at (2.8,10.15) {$O(1)$ per token};

\node[tier, fill=white, draw=episodicblue] (em) at (6.5,11.7) {};
\node[tierhead, fill=episodicfill, draw=episodicblue, text=episodicblue!80!black] at (6.5,13.45) {Episodic Memory};

\node[innerbox, fill=episodicinner, draw=episodicblue, minimum height=0.7cm] at (6.5,12.6) {%
  \begin{tabular}{c}
  {\sffamily\scriptsize\bfseries\color{episodicblue!80!black}KV Buffer}\\[-2pt]
  {\ttfamily\tiny\color{episodicblue!60!black}$\{(k_i,v_i,\tau_i)\}^{M}$}
  \end{tabular}};

\node[innerbox, fill=episodicinner, draw=episodicblue, minimum height=0.7cm] at (6.5,11.7) {%
  \begin{tabular}{c}
  {\sffamily\scriptsize\bfseries\color{episodicblue!80!black}Attention}\\[-2pt]
  {\ttfamily\tiny\color{episodicblue!60!black}$\mathrm{softmax}(qk^\top\!/\!\sqrt{d})\,v$}
  \end{tabular}};

\node[innerbox, fill=episodicinner, draw=episodicblue] at (6.5,10.85) {%
  {\ttfamily\tiny\color{episodicblue!80!black}$r_t^E$}};

\node[costlabel, fill=episodicfill, draw=episodicblue, text=episodicblue!80!black] at (6.5,10.15) {$O(n)$ per token};

\node[tier, fill=white, draw=semanticgreen] (sm) at (10.2,11.7) {};
\node[tierhead, fill=semanticfill, draw=semanticgreen, text=semanticgreen!80!black] at (10.2,13.45) {Semantic Memory};

\node[innerbox, fill=semanticinner, draw=semanticgreen, minimum height=0.7cm] at (10.2,12.6) {%
  \begin{tabular}{c}
  {\sffamily\scriptsize\bfseries\color{semanticgreen!80!black}Low-Rank Adapter}\\[-2pt]
  {\ttfamily\tiny\color{semanticgreen!60!black}rank $r = d/16$}
  \end{tabular}};

\node[innerbox, fill=semanticinner, draw=semanticgreen, minimum height=0.7cm] at (10.2,11.7) {%
  {\ttfamily\tiny\color{semanticgreen!60!black}$W_{\!u}\,\mathrm{ReLU}(W_{\!d}\,x)$}};

\node[innerbox, fill=semanticinner, draw=semanticgreen] at (10.2,10.85) {%
  {\ttfamily\tiny\color{semanticgreen!80!black}$r_t^S$}};

\node[costlabel, fill=semanticfill, draw=semanticgreen, text=semanticgreen!80!black] at (10.2,10.15) {$O(1)$ per token};

\draw[->, very thick, consred, dashed] (8.25,11.7) -- (8.45,11.7);

\node[box, fill=consfill, draw=consred, minimum width=5.2cm, minimum height=1.1cm] (cons) at (6.5,9.15) {};
\node[font=\sffamily\small\bfseries, text=consred!80!black] at (6.5,9.5) {Memory Consolidation};
\node[font=\ttfamily\tiny, text=consred] at (6.5,9.15) {$\mathcal{L}_{\mathrm{cons}} = \mathbb{E}\|f_{\mathrm{sem}}(x) - \mathrm{sg}(r^E)\|^2$};
\node[font=\sffamily\tiny, text=consred!60] at (6.5,8.8) {train semantic to predict episodic output};

\draw[consred, dashed, thick] (6.5,9.7) -- (6.5,9.95);
\draw[consred, dashed, thick] (10.2,9.7) -- (10.2,9.95);

\draw[routerpurp, very thick, dashed] (cons.east) -| (12.2,12.0);
\draw[routerpurp, very thick, dashed] (12.2,12.0) -- (12.2,15.2) -- (9.3,15.2);
\draw[->, routerpurp, very thick] (9.3,15.2) -- (9.25,15.2);

\node[box, fill=routerfill, draw=routerpurp, minimum width=0.8cm, minimum height=1cm, inner sep=1pt] at (12.2,12.0) {};
\node[font=\sffamily\scriptsize\bfseries, text=routerpurp!80!black] at (12.2,12.25) {$q_t$};
\node[font=\sffamily\tiny, text=routerpurp!50] at (12.2,11.9) {quality};
\node[font=\sffamily\tiny, text=routerpurp!50] at (12.2,11.65) {signal};

\draw[->, thick, gray!70] (2.8,9.55) -- (2.8,7.5) -- (6.15,7.3);
\draw[->, thick, gray!70] (6.5,8.6) -- (6.5,7.65);
\draw[->, thick, gray!70] (10.2,9.55) -- (10.2,7.5) -- (6.85,7.3);

\node[font=\sffamily\tiny\bfseries, text=workingyellow!80!black] at (2.3,8.6) {$\times\pi^{(1)}$};
\node[font=\sffamily\tiny\bfseries, text=episodicblue!80!black] at (7.1,8.2) {$\times\pi^{(2)}$};
\node[font=\sffamily\tiny\bfseries, text=semanticgreen!80!black] at (10.8,8.6) {$\times\pi^{(3)}$};

\node[circle, draw=gray!70, fill=gray!5, minimum size=0.7cm, inner sep=0pt] (sum) at (6.5,7.3) {\large$\Sigma$};

\draw[->, thick, gray!70] (sum.south) -- ++(0,-0.5);
\node[box, fill=gray!5, draw=gray!60, minimum width=2.8cm] (an1) at (6.5,6.2) {\small Add \& Norm};

\draw[->, thick, gray!70] (an1.south) -- ++(0,-0.5);
\node[box, fill=gray!5, draw=gray!60, minimum width=3.2cm] (ff) at (6.5,5.15) {\small Feed-Forward};

\draw[->, thick, gray!70] (ff.south) -- ++(0,-0.5);
\node[box, fill=gray!5, draw=gray!60, minimum width=2.8cm] (an2) at (6.5,4.1) {\small Add \& Norm};

\draw[gray!40, dashed] (sum.west) -- ++(-0.6,0) |- (an1.west);
\node[circle, fill=gray!40, inner sep=1pt] at (an1.west) {};
\draw[gray!40, dashed] (ff.west) -- ++(-0.6,0) |- (an2.west);
\node[circle, fill=gray!40, inner sep=1pt] at (an2.west) {};

\draw[->, thick, gray!70] (an2.south) -- ++(0,-0.6);
\node[box, fill=gray!5, draw=gray!60, minimum width=3.8cm] (output) at (6.5,2.85) {};
\node[font=\sffamily\small\bfseries, text=gray!80!black] at (6.5,3.05) {Output Projection};
\node[font=\ttfamily\scriptsize, text=gray] at (6.5,2.65) {$o_t$};

\draw[->, thick, gray!70] (output.south) -- ++(0,-0.5);
\node[font=\ttfamily\tiny, text=gray!60] at (6.5,1.85) {$\mathcal{L} = \mathcal{L}_{\mathrm{task}} + \lambda_E \sum\pi^{(2)} - \lambda_S \sum\pi^{(3)}\!\cdot\! q_t + \gamma\!\cdot\!\mathcal{L}_{\mathrm{cons}}$};

\end{tikzpicture}%
}
\caption{\textbf{The \ours{} architecture.} Each layer routes tokens through a consolidation-aware router to one of three memory tiers: (i) a continuous-time working memory for local dynamics, (ii) an episodic memory buffer accessed via attention for novel retrieval, and (iii) a semantic memory adapter for consolidated patterns. The consolidation loss (red, dashed) trains semantic memory to approximate episodic retrieval; the quality signal $q_t$ (purple, dashed) feeds back to the router. As $q_t$ increases during training, the router shifts from episodic ($O(n)$) to semantic ($O(1)$) routing, producing a 37.8$\times$ reduction in attention compute.}
\label{fig:architecture}
\end{figure}

The key empirical signature distinguishing \ours{} from prior work is \textbf{decreasing attention utilization over time}. As shown in Figure~\ref{fig:attention_curve}, SeqBoat maintains roughly constant attention usage throughout training, whereas \ours{}'s attention demand drops by \textbf{37.8$\times$} as consolidation progresses, emerging through a sharp phase transition at approximately 3K training steps.

\begin{figure}[t]
\centering
\begin{tikzpicture}
\begin{axis}[
    name=leftplot,
    width=0.52\textwidth,
    height=5.5cm,
    xlabel={Training Step},
    ylabel={Attention Fraction (\%)},
    xmin=0, xmax=10000,
    ymin=0, ymax=45,
    xtick={0,2000,4000,6000,8000,10000},
    xticklabels={0,2K,4K,6K,8K,10K},
    ytick={0,10,20,30,40},
    legend style={at={(0.98,0.98)},anchor=north east,font=\scriptsize,draw=gray!50},
    grid=major,
    grid style={gray!20},
    tick label style={font=\scriptsize},
    label style={font=\small},
]
\addplot[color=orange, thick, mark=none, domain=0:10000, samples=80]
    {23 + 1.5*sin(x*0.18) + 0.8*rand};
\addlegendentry{SeqBoat}

\addplot[color=green!60!black, thick, mark=none, dashed, domain=0:10000, samples=80]
    {12.5 + 0.5*sin(x*0.12)};
\addlegendentry{Jamba (1:7)}

\addplot[color=blue!80!black, very thick, mark=none] coordinates {
    (0,37.8) (500,36.5) (1000,34.0) (1500,30.5) (2000,26.0)
    (2500,21.0) (2800,18.0) (2900,16.7) (3000,14.0) (3050,10.0)
    (3100,5.0) (3150,2.8) (3200,2.0) (3300,1.8) (3500,1.6)
    (4000,1.5) (5000,1.5) (6000,1.4) (7000,1.5) (8000,1.5)
    (9000,1.6) (10000,1.6)
};
\addlegendentry{CRAM (ours)}

\draw[red!70!black, thick, dashed] (axis cs:3100,0) -- (axis cs:3100,42);
\node[font=\tiny, red!70!black, rotate=90, anchor=south] at (axis cs:3100,22) {Phase Transition};

\draw[->, thick, red!60!black] (axis cs:800,37.8) -- (axis cs:800,1.6);
\node[font=\tiny, red!60!black, anchor=west] at (axis cs:900,20) {$37.8\times$};
\end{axis}

\begin{axis}[
    at={(leftplot.east)},
    anchor=west,
    xshift=1.2cm,
    width=0.48\textwidth,
    height=5.5cm,
    xlabel={Consolidation Quality $q_t$},
    ylabel={Density},
    xmin=0.4, xmax=1.05,
    ymin=0, ymax=18,
    ytick={0,5,10,15},
    legend style={at={(0.42,0.98)},anchor=north east,font=\scriptsize,draw=gray!50},
    grid=major,
    grid style={gray!20},
    tick label style={font=\scriptsize},
    label style={font=\small},
]
\addplot[color=red!70!black, thick, fill=red!15, fill opacity=0.5, mark=none, smooth] coordinates {
    (0.40,0.0) (0.45,0.3) (0.50,1.2) (0.55,3.5) (0.58,5.8)
    (0.60,7.5) (0.62,8.2) (0.64,7.8) (0.66,6.0) (0.70,3.2)
    (0.75,1.0) (0.80,0.2) (0.85,0.0)
};
\addlegendentry{Step 0}

\addplot[color=orange!80!black, thick, fill=orange!15, fill opacity=0.5, mark=none, smooth] coordinates {
    (0.65,0.0) (0.70,0.2) (0.74,1.0) (0.78,3.5) (0.80,6.0)
    (0.82,9.0) (0.83,9.8) (0.84,9.2) (0.86,6.5) (0.88,3.0)
    (0.90,1.2) (0.93,0.3) (0.95,0.0)
};
\addlegendentry{Step 3K}

\addplot[color=blue!80!black, thick, fill=blue!15, fill opacity=0.5, mark=none, smooth] coordinates {
    (0.90,0.0) (0.93,0.2) (0.95,1.0) (0.96,2.5) (0.97,5.5)
    (0.98,11.0) (0.99,16.5) (0.995,14.0) (1.00,8.0) (1.02,0.0)
};
\addlegendentry{Step 3.1K+}
\end{axis}
\end{tikzpicture}
\caption{\textbf{Phase transition in consolidation.} \ours{}'s attention usage remains moderate until approximately 3K steps, then drops sharply as semantic memory begins accurately approximating episodic retrieval. This emergence phenomenon mirrors grokking in neural networks. Prior methods show no such transition because their compute allocation is static by design.}
\label{fig:attention_curve}
\end{figure}
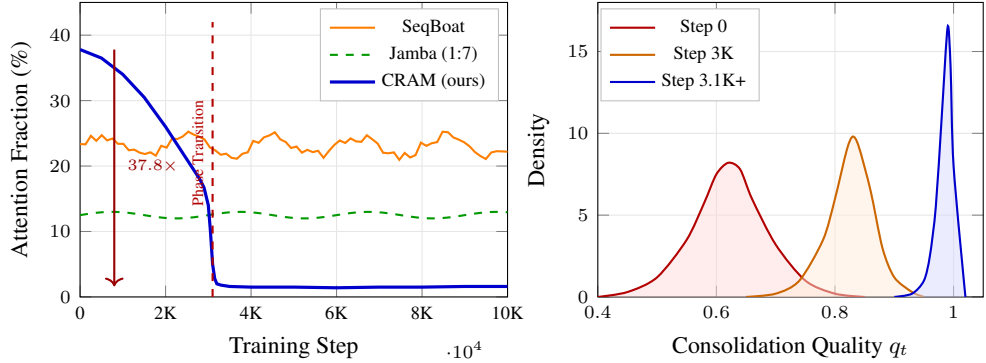

We make five contributions: (1) we show that 88\% of attention in pretrained LLMs retrieves predictable information and that this redundancy persists throughout training (\S\ref{sec:redundancy}); (2) we introduce a memory consolidation mechanism where semantic memory learns to approximate episodic retrieval, producing the first architecture with \emph{systematically decreasing} attention usage (\S\ref{sec:consolidation}); (3) we prove that without consolidation, any routing scheme requires $\Omega(f \cdot n)$ attention for recurring-pattern tasks (\S\ref{sec:impossibility}); (4) we introduce the SRCD benchmark and show that consolidated patterns transfer across tasks with 48--52\% attention reduction (\S\ref{sec:srcd}, \S\ref{sec:transfer}); and (5) we show that the learned consolidation dynamics quantitatively match human episodic-to-semantic memory transition curves (\S\ref{sec:biological}).

\section{Attention Redundancy in Pretrained Models}
\label{sec:redundancy}

Before presenting our method, we establish the core problem: standard training does not optimize attention efficiency, leading to massive redundancy in deployed models.

\begin{definition}[Attention Redundancy]
For layer $\ell$ with attention output $a_t^{(\ell)}$ and pre-attention hidden state $h_t^{(\ell-1)}$, we define redundancy as:
\begin{equation}
    R^{(\ell)}_t = 1 - \frac{\|a_t^{(\ell)} - \hat{a}_t^{(\ell)}\|_2}{\|a_t^{(\ell)}\|_2}
\end{equation}
where $\hat{a}_t^{(\ell)} = f_{\text{probe}}(h_t^{(\ell-1)})$ is a linear probe predicting attention output from the hidden state.
\end{definition}

When $R^{(\ell)}_t \approx 1$, attention retrieves information already present in the hidden state, meaning the attention operation is redundant. We trained linear probes on frozen GPT-2 (124M) and GPT-2 Medium (355M) using 10M tokens from OpenWebText. Overall redundancy is 0.84 for GPT-2 and 0.92 for GPT-2 Medium, with middle layers reaching 0.97--0.99 (full layer-wise breakdown in Table~\ref{tab:redundancy}, Appendix~\ref{app:redundancy}). Tracking redundancy during GPT-2 training from scratch reveals that it \emph{increases} over training (from 0.52 at 10K steps to 0.72 at 300K; see Table~\ref{tab:redundancy_training}, Appendix~\ref{app:redundancy}), confirming that standard training provides no signal to eliminate this waste.

A per-head analysis reveals three groups: 34\% of heads are highly redundant ($R > 0.8$), 41\% are partially predictable ($0.5 < R < 0.8$), and 25\% perform genuinely novel retrieval ($R < 0.5$). The high-redundancy heads are prime candidates for consolidation. Standard training optimizes \emph{what} to retrieve but not \emph{whether retrieval is necessary}.

\section{Related Work}

\textbf{Hybrid SSM-attention architectures.}
Jamba~\citep{lieber2024jamba} interleaves Mamba and attention at a fixed 7:1 ratio, and Bamba~\citep{ibm2025bamba} and Nemotron-H~\citep{nvidia2025nemotron} follow similar static designs. SeqBoat~\citep{ren2023sparse} learns sparse attention activation based on SSM state, achieving 20--40\% attention usage, but this sparsity remains constant throughout training; we verify empirically that SeqBoat's attention budget does not decrease as patterns are learned, and Theorem~\ref{thm:impossibility} proves that such static routing \emph{cannot} match consolidation-based efficiency on recurring pattern tasks. TransMamba~\citep{li2025transmamba} switches between attention and SSM at learned sequence positions, but these TransPoints are position based rather than consolidation based: the same position always uses the same mechanism regardless of whether the pattern has been encountered before. Mamba-2~\citep{dao2024mamba2} establishes a formal duality between SSMs and attention, yet does not address adaptive routing. Complementary efforts have explored pruning SSMs for resource-constrained deployment~\citep{shihab-etal-2025-efficient} and applying hybrid Mamba architectures to temporal localization tasks~\citep{shihab2025crash}, but these focus on model compression or domain-specific design rather than learned compute reduction. The core distinction of \ours{} is that attention usage \emph{decreases over training} as consolidation progresses, a property none of these methods exhibit.
Orthogonally, a large body of work reduces the cost of individual attention operations through sparse patterns~\citep{child2019generating,beltagy2020longformer}, low-rank approximations~\citep{wang2020linformer,choromanski2021rethinking}, linear attention~\citep{katharopoulos2020transformers}, adaptive span~\citep{sukhbaatar2019adaptive}, and hardware-aware implementations~\citep{dao2022flashattention}; see \citet{tay2022efficient} for a comprehensive survey. These methods reduce the \emph{cost per attention operation} but do not reduce the \emph{number} of operations based on learned familiarity, and are therefore complementary to our approach.

\textbf{Memory consolidation and adaptive computation.}
Complementary Learning Systems (CLS) theory~\citep{mcclelland1995there,kumaran2016learning} describes how biological memory consolidates episodic traces into semantic knowledge~\citep{tulving1972episodic,squire1992memory}. Neural implementations include sleep replay~\citep{rasch2007maintaining}, progressive networks~\citep{rusu2016progressive}, and elastic weight consolidation~\citep{kirkpatrick2017overcoming}. Prior work uses consolidation primarily to avoid catastrophic forgetting; we repurpose it to \emph{reduce compute}. Knowledge distillation~\citep{hinton2015distilling} is related in spirit, as it compresses a teacher into a student, but operates offline rather than as an online consolidation process.
On the adaptive computation side, ACT~\citep{graves2016adaptive} varies computation depth, PonderNet~\citep{banino2021pondernet} improves the training signal for halting decisions, early exit methods~\citep{schwartz2020right} allow layer skipping, and mixture of experts~\citep{shazeer2017outrageously,lepikhin2021gshard,fedus2022switch} routes tokens among specialized sub-networks. These approaches adapt \emph{how much} computation to use but do not address \emph{whether global retrieval is necessary}. External memory architectures such as the Differentiable Neural Computer~\citep{graves2016hybrid} learn to read and write memory but maintain fixed access patterns rather than consolidating away the need for retrieval.
Finally, the power law of practice~\citep{newell1981mechanisms}, the forgetting curve~\citep{ebbinghaus1885memory,wixted2004psychology}, and retrieval time reduction with repetition~\citep{rickard1997response} are well established in cognitive psychology. In Section~\ref{sec:biological}, we show that our learned consolidation dynamics follow the same laws.

\section{Method}

\subsection{Architecture Overview}

\ours{} processes sequences through $L$ layers, each containing four components: (i) a \textbf{continuous-time expert} $f_{\text{CT}}$ that handles local dynamics with irregular time gaps via ODE-style updates~\citep{chen2018neural}; (ii) an \textbf{episodic memory} $\mathcal{M}^E$, a key-value buffer for high-novelty events accessed via attention; (iii) a \textbf{semantic memory} $f_{\text{sem}}$, a low-rank adapter~\citep{hu2022lora} that learns to approximate episodic retrieval; and (iv) a \textbf{consolidation-aware router} $g_\phi$ that selects among CT-only processing, episodic retrieval, or semantic approximation via Gumbel-Softmax sampling~\citep{jang2017categorical}. We describe each component below.

\subsection{Three-Tier Memory Architecture}

\paragraph{Working memory.} The continuous-time expert processes token $x_t$ with time gap $\Delta\tau_t$ using a discretized ODE integration scheme inspired by Neural ODEs~\citep{chen2018neural} and latent ODE methods for irregular time series~\citep{rubanova2019latent,kidger2020neural}:
\begin{align}
    h^{(k+1)} &= h^{(k)} + \Delta t \cdot \sigma(\tau\text{-gate}) \odot \tanh(W_1 h^{(k)} + W_2 x_t), \quad
    f_{\text{CT}}(x_t, \Delta\tau_t) = W_o h^{(K)} + x_t
\end{align}
where $\tau\text{-gate} = W_\tau \log(1 + \Delta\tau_t)$ modulates the dynamics based on temporal gaps.

\paragraph{Episodic memory.} This component maintains a bounded buffer $\mathcal{M}^E = \{(k_i, v_i, \tau_i, c_i)\}_{i=1}^{M}$ storing high-novelty events. Retrieval uses standard attention:
\begin{equation}
    r_t^E = \sum_i \alpha_i v_i, \quad \alpha_i = \text{softmax}\left(\frac{q_t^\top k_i}{\sqrt{d}}\right)
\end{equation}

\paragraph{Semantic memory.} This component serves as the consolidation target. A low-rank adapter $f_{\text{sem}}(x) = W_{\text{up}} \cdot \text{ReLU}(W_{\text{down}} x)$ with rank $r = d/16$ is trained to approximate episodic retrieval:
\begin{equation}
    r_t^S = f_{\text{sem}}(x_t)
\end{equation}

\subsection{Memory Consolidation Mechanism}
\label{sec:consolidation}

The consolidation objective trains semantic memory to predict what episodic retrieval would return:
\begin{equation}
    \mathcal{L}_{\text{cons}} = \E_{t \sim \mathcal{D}_{\text{cons}}} \left[ \| f_{\text{sem}}(x_t) - \text{sg}(r_t^E) \|_2^2 \right]
\end{equation}
where $\text{sg}(\cdot)$ denotes stop-gradient and $\mathcal{D}_{\text{cons}}$ samples tokens that used episodic retrieval. To measure how well semantic memory can replace episodic retrieval, we define the consolidation quality signal:
\begin{equation}
    q_t = \exp\left(-\|f_{\text{sem}}(x_t) - r_t^E\|_2^2 / \sigma^2\right)
\end{equation}
A high value of $q_t$ indicates that semantic memory can reliably replace episodic retrieval for the given pattern.

\subsection{Consolidation-Aware Routing}
\label{sec:routing}

The router outputs a distribution over three actions:
\begin{equation}
    \pi_t = \text{softmax}(g_\phi(z_t)) \in \Delta^3
\end{equation}
where the actions correspond to (1) CT only, (2) episodic retrieval, and (3) semantic approximation. The router features $z_t$ include the time gap, CT dynamics magnitude, \textbf{consolidation quality $q_t$}, and prediction uncertainty. The critical feature is $q_t$: as $f_{\text{sem}}$ improves during training, $q_t$ increases for recurring patterns, causing the router to shift from episodic retrieval (action 2) to semantic approximation (action 3).

The overall training objective combines task loss with routing incentives:
\begin{equation}
    \mathcal{L} = \mathcal{L}_{\text{task}} + \lambda_E \sum_t \pi_t^{(2)} - \lambda_S \sum_t \pi_t^{(3)} \cdot q_t + \gamma \cdot \mathcal{L}_{\text{cons}}
\end{equation}
This formulation penalizes episodic retrieval and rewards semantic approximation when consolidation quality is high, creating a natural pressure toward decreasing attention usage.

\section{Theoretical Analysis}
\label{sec:theory}

We now establish that consolidation is not merely a useful heuristic but a \emph{necessary} condition for optimal attention efficiency on tasks with recurring patterns.

\subsection{Impossibility Without Consolidation}
\label{sec:impossibility}

\begin{definition}[Static Routing Scheme]
A routing scheme is \textbf{static} if the routing decision $r_t \in \{\text{local}, \text{global}\}$ depends only on the current input $x_t$ and fixed model parameters $\theta$, not on training history or pattern frequency.
\end{definition}

All existing hybrid architectures, including SeqBoat, TransMamba, and Jamba, employ static routing: the decision at position $t$ does not depend on whether the pattern at $t$ has been encountered before.

\begin{theorem}[Lower Bound for Static Routing]
\label{thm:impossibility}
Consider a task where a fraction $f$ of positions require correct retrieval from a set of $K$ recurring patterns, each appearing with frequency $f/K$, and correct retrieval is necessary for task success. Then any static routing scheme achieving task accuracy $\geq 1 - \epsilon$ must have expected attention usage:
\begin{equation}
    \E[\text{attention ops}] \geq (1-\epsilon) \cdot f \cdot n
\end{equation}
where $n$ is the sequence length.
\end{theorem}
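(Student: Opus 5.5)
The plan is a counting argument over positions, combined with an information (indistinguishability) argument at each retrieval position. First I will formalize the task so the bound is well posed. The sequence has $n$ positions. A set $P \subseteq [n]$ of expected size $f n$ consists of \emph{retrieval positions}: there, the correct output depends on content stored at earlier positions, which is the pattern's binding in the current sequence. The key modeling assumption, read off from the phrase ``correct retrieval is necessary for task success'', is that at a retrieval position $t$ the output of the local path (the SSM/CT expert, or any computation not using global attention) is conditionally independent of the retrieved value given $x_t$ and $\theta$. Consequently, if $r_t = \text{local}$, the prediction at $t$ is correct with probability at most that of guessing. I will idealize this guessing probability as $0$, or absorb it into $\epsilon$.

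Next I use the definition of a static routing scheme. Since $r_t$ is a function of $(x_t,\theta)$ only, whether position $t$ is sent to global attention does not depend on how many times the pattern has previously occurred. Let $p_t = \Pr[r_t = \text{global}]$, taken over the input distribution and any internal randomness such as Gumbel sampling. For $t \in P$, correctness requires $r_t = \text{global}$, so
\[
\Pr[\text{correct at } t] \le p_t .
\]
Averaging over retrieval positions, I interpret task accuracy as the fraction of retrieval positions answered correctly. This gives
\[
1-\epsilon \le \frac{\E\bigl[\sum_{t\in P} \mathbf{1}[\text{correct}_t]\bigr]}{\E|P|} \le \frac{\E\bigl[\sum_{t \in P} \mathbf{1}[r_t=\text{global}]\bigr]}{f n}.
\]
Each global routing counts as one attention operation, so
\[
\E[\text{attention ops}] \ge \E\Bigl[\sum_{t\in P}\mathbf{1}[r_t=\text{global}]\Bigr] \ge (1-\epsilon) f n .
\]
The frequency split $f/K$ per pattern plays no role beyond summing to $f$. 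I will remark that this is exactly the point: a static scheme cannot exploit the fact that a pattern recurs $fn/K$ times, because its decision at the $j$-th occurrence is identical to its decision at the first.

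The main obstacle is justifying the step ``local path $\Rightarrow$ failure'' rigorously, since a static scheme could in principle have memorized a pattern in $\theta$. My first approach is to define the task adversarially: the value bound to each recurring key is resampled uniformly per sequence from a large alphabet, so nothing fixed in $\theta$ can encode it. Under this choice, the local path's success probability is at most $1/|\mathcal{V}|$, which is negligible. If the bound must hold for arbitrary $\epsilon$, I will instead carry the term $(1-\epsilon - 1/|\mathcal{V}|)$ and let $|\mathcal{V}|\to\infty$.

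A second subtlety is whether accuracy is measured per sequence or per position. I will state the per-position version, and note that the per-sequence version only strengthens the requirement, since it needs all retrieval positions correct simultaneously. The bound therefore still follows.
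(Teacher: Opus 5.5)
Your proposal is correct and follows essentially the same route as the paper's main-text proof. Each correctly answered retrieval position must have been routed to global attention; the paper simply asserts ``local $\Rightarrow$ failure,'' which you make explicit and justify by per-sequence value resampling. Normalizing accuracy over the $fn$ retrieval positions then gives $(1-\epsilon)fn$ at once.

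That normalization is essential, not cosmetic. The appendix version measures error over all $n$ positions, which only yields $(f-\epsilon)n$. Its final step to $(1-\epsilon)fn$ is invalid whenever $0<f<1$ and $\epsilon>0$, since $(1-\epsilon)fn-(f-\epsilon)n=\epsilon(1-f)n>0$.
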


\begin{proof}
Let $r(x)$ denote the routing decision for input $x$. For a static scheme, $r(x)$ is fixed for each input type. Consider the $f \cdot n$ positions requiring retrieval. For each such position with pattern $p_i$, if $r(p_i) = \text{local}$ then retrieval fails, contributing to error; if $r(p_i) = \text{global}$ then retrieval succeeds but uses attention. To achieve accuracy $\geq 1 - \epsilon$, at most $\epsilon \cdot f \cdot n$ retrieval positions can fail, so at least $(1-\epsilon) \cdot f \cdot n$ must use global attention.
\end{proof}

The following corollary shows that consolidation breaks through this lower bound.

\begin{corollary}[Consolidation Enables Sub-Linear Attention]
A consolidation-based scheme can achieve accuracy $\geq 1 - \epsilon$ with expected attention:
\begin{equation}
    \E[\text{attention ops}] \leq \epsilon_{\text{cons}} \cdot f \cdot n + O(\sqrt{n \log(K/\delta)})
\end{equation}
where $\epsilon_{\text{cons}} \ll 1$ is the fraction of patterns that fail to consolidate.
\end{corollary}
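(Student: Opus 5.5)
The plan is to exhibit a concrete consolidation-based scheme and then bound its attention usage by splitting the $f\cdot n$ retrieval positions into two classes: \emph{consolidated} and \emph{unconsolidated} patterns. The scheme will be an idealized version of \ours{}. The router sends a position with pattern $p_i$ to semantic memory (action 3) whenever $q_t \geq 1-\eta$ for a threshold $\eta$, and to episodic attention otherwise. Let $C\subseteq\{1,\dots,K\}$ be the set of patterns for which $f_{\text{sem}}$ has consolidated, meaning $\|f_{\text{sem}}(x)-r^E\|_2^2 \le \sigma^2\log\frac{1}{1-\eta}$ on every occurrence. The remaining patterns have total frequency mass $\epsilon_{\text{cons}}$ among retrieval positions, so $\sum_{i\notin C} f/K = \epsilon_{\text{cons}} f$.

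\textbf{Accuracy.} This step goes the opposite way from Theorem~\ref{thm:impossibility}. There, the static router's decision $r(p_i)$ could not depend on consolidation state, so every correctly handled retrieval had to use global attention. Here, a position routed to semantic memory has output within $O(\sigma\sqrt{\eta})$ of the episodic output. Under the assumption that retrieval succeeds when the returned vector is within a margin of $r^E$, such positions are answered correctly. Unconsolidated positions use attention and are also correct. So the only errors come from routing mistakes, namely $q_t$ being misestimated near the threshold. We will charge these mistakes to the $\epsilon$ budget.

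\textbf{Attention count.} The attention used on unconsolidated patterns is at most $\epsilon_{\text{cons}} f n$ in expectation. The additive term covers the cost of \emph{certifying} consolidation: for each pattern the router must have enough episodic samples to estimate $q_t$ reliably. We will apply Hoeffding's inequality to the empirical consolidation error of each pattern and take a union bound over the $K$ patterns at confidence $\delta$. This shows that $O(\log(K/\delta)/\eta^2)$ verification retrievals per pattern suffice. The total verification cost then has to be shown to be $O(\sqrt{n\log(K/\delta)})$. To get this, we will spread the verification spot-checks over the sequence, choosing the spot-check rate to balance concentration error against cost. The result is a $\sqrt{n}$ deviation term from the sum of Bernoulli routing indicators.

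\textbf{Main obstacle.} The hard part is making the $\sqrt{n\log(K/\delta)}$ term honest. It requires treating $\eta$ as a constant and assuming consolidation happens within a bounded number of episodic exposures, which in turn needs a learnability assumption on $f_{\text{sem}}$, for example that $r^E$ restricted to each pattern is realizable by the rank-$d/16$ adapter. My first attempt will be to state this realizability as an explicit hypothesis and take $\epsilon_{\text{cons}}$ as defined relative to it. I would then bound the deviation of actual attention use from its mean by applying Azuma's inequality to the martingale of routing indicators, which gives exactly the $O(\sqrt{n\log(K/\delta)})$ term with probability $1-\delta$.
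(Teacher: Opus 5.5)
The paper gives no proof of this corollary. It is stated after Theorem~\ref{thm:impossibility} and illustrated only by the SRCD count $0.3\times 0.05n$ ``plus overhead,'' and the statement itself mixes an expectation with a confidence parameter $\delta$. Your main term $\epsilon_{\text{cons}} f n$ is exactly that implicit count. Making realizability of $r^E$ by the adapter an explicit hypothesis is the right move, because that is what escapes Theorem~\ref{thm:impossibility}: its proof assumes every non-global routing of a retrieval position fails. At inference your router is also a fixed function of $x_t$ and trained parameters, so ``depending on consolidation state'' is not the operative difference. Two steps, however, fail as written. First, your router tests $q_t \geq 1-\eta$ at each position, but $q_t$ is defined through $r_t^E$. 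Evaluating it means running attention at every retrieval position, so the count becomes $fn$, which is the static bound. The scheme must route every occurrence of $p_i$ on a stored per-pattern certificate $\hat q_i$ obtained from a few episodic spot-checks. Your certification paragraph does this implicitly, but it has to be the definition of the scheme. Relatedly, if $r^E$ is determined by the pattern, as ``on every occurrence'' suggests, one check per pattern suffices and Hoeffding is unnecessary. If $r^E$ is random, a certificate of the mean guarantees only ``most occurrences,'' and the remainder must be charged to $\epsilon$.

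Second, the additive term. Your Hoeffding-plus-union-bound count gives a verification cost of $O(K\log(K/\delta)/\eta^2)$, plus $KB$ if each pattern needs $B$ episodic exposures to consolidate within the sequence. This cost does not depend on $n$ and is linear in $K$, so there is nothing to ``balance.'' It is $O(\sqrt{n\log(K/\delta)})$ only under an extra hypothesis such as $n \geq C K^2\log(K/\delta)/\eta^4$, which you need to state. Azuma cannot supply this term for the statement as written. It controls the deviation of the \emph{realized} attention count from its conditional mean with probability $1-\delta$, whereas $\E[\text{attention ops}]$ is just the sum of per-position routing probabilities and has no deviation term. You are attributing one term to two unrelated mechanisms, so pick a statement. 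In expectation you get $\E[\text{attention ops}] \leq \epsilon_{\text{cons}} f n + O(K\log(K/\delta)/\eta^2) + \delta f n$, where the last term covers failure of the certificates, and no $\sqrt{n}$ appears. With probability $1-\delta$ you get $\epsilon_{\text{cons}} f n + O(\sqrt{n\log(K/\delta)}) + O(K\log(K/\delta)/\eta^2)$, and there your Azuma step is the correct source of the square root. Either version matches the corollary only after absorbing the $K$-linear term via the hypothesis above, which the paper leaves implicit.
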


To make this concrete, consider SRCD with $f = 0.05$ (5\% query positions) and 70\% recurring patterns ($\epsilon_{\text{cons}} = 0.3$). The static routing lower bound is $0.05n = 5\%$ attention, while \ours{} achieves $0.3 \times 0.05n = 1.5\%$ attention (plus overhead), because the 70\% of recurring patterns are handled entirely by semantic memory.

\subsection{Consolidation Convergence}

We next characterize the convergence rate of the consolidation process, drawing on standard results from stochastic optimization~\citep{bottou2018optimization}.

\begin{theorem}[Consolidation Convergence]
\label{thm:consolidation}
Let $\mathcal{P}$ be a distribution over retrieval patterns with Lipschitz constant $L$. After $T$ consolidation updates with learning rate $\eta < 2/L^2$:
\begin{equation}
    \E[\|f_{\text{sem}}(x) - r^E(x)\|^2] \leq \epsilon_{\text{approx}}^2 + \frac{C}{T\eta}
\end{equation}
where $\epsilon_{\text{approx}}$ is the best approximation error achievable by the semantic memory architecture.
\end{theorem}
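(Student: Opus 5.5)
The plan is to read Theorem~\ref{thm:consolidation} as a standard SGD convergence statement for the consolidation objective $\mathcal{L}_{\text{cons}}$, reduced to a smooth stochastic optimization problem as in \citet{bottou2018optimization}. First, fix the episodic target: because of the stop-gradient, $r^E(x)$ is a fixed regression target during each consolidation update, so $\mathcal{L}_{\text{cons}}(\theta) = \E_{x}\|f_\theta(x) - r^E(x)\|^2$ is an ordinary least-squares risk over the adapter parameters $\theta = (W_{\text{up}}, W_{\text{down}})$. I will interpret ``Lipschitz constant $L$'' as saying that $\nabla_\theta \mathcal{L}_{\text{cons}}$ is $L$-Lipschitz on the region visited; the stated step size $\eta < 2/L^2$ suggests the intended smoothness constant is really $L^2$ (e.g.\ $f_\theta$ is $L$-Lipschitz in $\theta$, so the squared loss has curvature of order $L^2$). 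I will write $\beta = L^2$ for this smoothness constant. Next, split the objective as $\mathcal{L}_{\text{cons}}(\theta) = \epsilon_{\text{approx}}^2 + (\mathcal{L}_{\text{cons}}(\theta) - \mathcal{L}^\star)$, where $\mathcal{L}^\star = \inf_\theta \mathcal{L}_{\text{cons}} = \epsilon_{\text{approx}}^2$ by definition of the best achievable approximation error. It then suffices to show that the excess risk after $T$ steps is at most $C/(T\eta)$.

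The core step is the one-step descent inequality. By $\beta$-smoothness,
\begin{equation}
\mathcal{L}(\theta_{t+1}) \le \mathcal{L}(\theta_t) - \eta\,\langle \nabla\mathcal{L}(\theta_t), g_t\rangle + \tfrac{\beta\eta^2}{2}\|g_t\|^2 ,
\end{equation}
where $g_t$ is the stochastic gradient. Taking expectations and using unbiasedness, together with a bounded-second-moment assumption $\E\|g_t\|^2 \le \|\nabla\mathcal{L}\|^2 + \sigma_g^2$, the condition $\eta < 2/\beta$ makes the coefficient $\eta(1 - \beta\eta/2)$ positive. To obtain a rate on the excess loss rather than only on the gradient norm, I would add a convexity-type hypothesis. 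The simplest choice is to treat the adapter as linear in its parameters (fix the ReLU activation pattern, or freeze $W_{\text{down}}$), so that the squared loss is convex. The standard averaged-iterate argument, expanding $\|\theta_{t+1}-\theta^\star\|^2$, telescoping, and dividing by $T\eta$, then gives
\begin{equation}
\E[\mathcal{L}(\bar\theta_T)] - \mathcal{L}^\star \le \frac{\|\theta_0-\theta^\star\|^2}{2T\eta} + \frac{\eta\sigma_g^2}{2}.
\end{equation}

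The main obstacle is the noise term $\eta\sigma_g^2/2$, which does not appear in the stated bound. I would handle it in one of two ways. The first is to note that in the realizable or interpolation regime the gradient noise vanishes at the optimum, giving $\sigma_g^2 = 0$ and the clean $C/(T\eta)$ rate with $C = \|\theta_0-\theta^\star\|^2/2$. The second is to absorb the residual variance into $\epsilon_{\text{approx}}^2$: in the least-squares setting the noise variance at $\theta^\star$ is controlled by the residual $\epsilon_{\text{approx}}^2$, so the noise term can be folded into a constant multiple of $\epsilon_{\text{approx}}^2$, and the constant can be adjusted accordingly. A second, more delicate point is that $r^E$ drifts as the episodic attention parameters train. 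I would either analyze consolidation with the episodic module frozen, or assume the drift is summable, in which case it adds only an $O(1/T)$ term that can be absorbed into $C$. Finally, since the statement is written for the final iterate rather than an average, I would either state the result for the averaged iterate $\bar\theta_T$ or appeal to the last-iterate bounds in \citet{bottou2018optimization}, at the cost of a logarithmic factor absorbed into $C$.
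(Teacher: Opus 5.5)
The paper gives no proof of this theorem beyond the pointer to \citet{bottou2018optimization}. Your SGD reduction is therefore the intended route, and you correctly locate the obstacles. But the patches you propose do not deliver the bound as stated, and part of the gap cannot be closed at all.

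On convexity: ``fixing the ReLU activation pattern'' is not a legitimate reduction, because the pattern moves with $W_{\text{down}}$ during training. Freezing $W_{\text{down}}$ is legitimate, but it silently redefines $\epsilon_{\text{approx}}$ as the best error of the frozen-feature class rather than of the architecture. Without some such restriction the claim is false. For example, $W_{\text{up}}=W_{\text{down}}=0$ (or a configuration with all units dead) makes every per-sample gradient zero, so SGD started there never moves and its loss stays at $\E\|r^E\|^2$. Moreover, a smoothness-only argument bounds averaged squared gradient norms, not excess risk. A minor point: your displayed averaged-iterate bound drops the $\eta^2\|\nabla\mathcal{L}\|^2$ part of $\E\|g_t\|^2$. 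Keeping $\eta$ up to $2/\beta$ requires co-coercivity and costs a factor $(1-\eta\beta/2)^{-1}$.

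The real gap is the noise term.
\begin{itemize}
\item Folding $\eta\sigma_g^2/2$ into a multiple of $\epsilon_{\text{approx}}^2$ proves $c\,\epsilon_{\text{approx}}^2 + C/(T\eta)$ with $c>1$, which is strictly weaker than the stated bound.
\item The interpolation patch fails under your own variance model. There $\sigma_g^2$ is a uniform bound, so $\sigma_g^2=0$ means deterministic gradients, whereas interpolation only removes the noise at $\theta^\star$. You would need a relative-noise condition such as $\E\|g_t\|^2\le 2\beta_{\max}(\mathcal{L}(\theta_t)-\mathcal{L}^\star)$. That does give $\|\theta_0-\theta^\star\|^2/(2\eta(1-\eta\beta_{\max})T)$ for $\eta<1/\beta_{\max}$, but interpolation also forces $\epsilon_{\text{approx}}=0$, so it covers only a degenerate case.
\item Generic smooth-convex SGD cannot remove this floor. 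The fixed-stepsize results in \citet{bottou2018optimization} themselves only give convergence to an $O(\eta)$ neighbourhood.
\end{itemize}
What does give coefficient $1$ on $\epsilon_{\text{approx}}^2$ with $\epsilon_{\text{approx}}>0$ and constant $\eta$ is the least-squares-specific analysis of averaged SGD by Bach and Moulines (2013). In your frozen-$W_{\text{down}}$ setting, assume features bounded by $R$ and residuals bounded by $\sigma$. Then the Polyak--Ruppert average with $\eta\lesssim 1/R^2$ has excess risk $O\bigl((\sigma^2 r+\|\theta_0-\theta^\star\|^2/\eta)/T\bigr)$, with $r$ the adapter rank, which is of the form $C/(T\eta)$.

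That guarantee holds for the average only. With constant $\eta$ and nonzero residual, the last iterate has a stationary excess risk of order $\eta$ times the noise level, so no final-iterate version of the bound holds for all $T$. The logarithmic last-iterate bounds you invoke require decaying step sizes. So the theorem can be proved only for the averaged iterate, under these added hypotheses.
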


\begin{theorem}[Attention Reduction Guarantee]
\label{thm:attention_reduction}
If a fraction $\rho$ of retrieval patterns are $L$-Lipschitz and recurring with frequency $\geq f_{\min}$, then after $T \geq \frac{C}{\epsilon^2 f_{\min}} \log \frac{\rho}{\delta}$ training steps, with probability $\geq 1-\delta$:
\begin{equation}
    \frac{\E[\text{attention usage at step } T]}{\E[\text{attention usage at step } 0]} \leq 1 - \rho + \epsilon
\end{equation}
\end{theorem}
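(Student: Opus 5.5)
The plan is to reduce the statement to Theorem~\ref{thm:consolidation} applied separately to each recurring pattern, followed by a union bound over patterns and a routing argument. The attention usage at step $0$ is normalized as follows. Before consolidation, $q_t$ is uninformative, and we assume the router sends every retrieval position to the episodic tier. So $\E[\text{attention usage at step }0]$ equals the total retrieval mass $A_0 = f\cdot n$, consistent with Theorem~\ref{thm:impossibility}. The task then becomes showing that, at step $T$, at most a $(1-\rho+\epsilon)$ fraction of this mass is still routed to action~(2).

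\textbf{Step 1: per-pattern sample count.} A pattern with frequency $\ge f_{\min}$ appears in $\mathcal{D}_{\text{cons}}$ at rate at least $f_{\min}$ per step. A multiplicative Chernoff bound then gives, with probability $\ge 1-\delta/\rho$ per pattern, at least $T_i \ge \tfrac12 f_{\min} T$ consolidation updates on that pattern. This needs $T \gtrsim f_{\min}^{-1}\log(\rho/\delta)$, which is absorbed into the stated threshold. We take a union bound over the (normalized) mass $\rho$ of consolidable patterns; this is the source of the $\log(\rho/\delta)$ factor.

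\textbf{Step 2: per-pattern approximation.} Because each such pattern is $L$-Lipschitz, Theorem~\ref{thm:consolidation} applies restricted to that pattern. It gives $\E\|f_{\text{sem}}-r^E\|^2 \le \epsilon_{\text{approx}}^2 + C/(T_i\eta)$. With $T_i \ge \tfrac12 f_{\min}T$ and $T \ge C/(\epsilon^2 f_{\min})\cdot\log(\rho/\delta)$, the error term is at most $\epsilon^2$ after adjusting constants. We assume the adapter is expressive enough on the recurring set that $\epsilon_{\text{approx}}\le\epsilon$, or else we fold $\epsilon_{\text{approx}}$ into $\epsilon$. Markov's inequality then shows that, on all but an $O(\epsilon)$ fraction of occurrences of consolidable patterns, $\|f_{\text{sem}}(x_t)-r_t^E\|^2\le \epsilon\sigma^2$. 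On those occurrences $q_t \ge e^{-\epsilon}$, i.e.\ $q_t\ge 1-\epsilon$.

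\textbf{Step 3: routing.} We need that the router, trained under the objective $\mathcal{L}$, assigns action~(3) whenever $q_t$ exceeds a threshold $q^\star$. From the loss, the marginal benefit of choosing semantic over episodic is $\lambda_E+\lambda_S q_t$ minus the increase in task loss. By Lipschitzness of the downstream layers, that increase is $O(\|f_{\text{sem}}-r^E\|)=O(\sqrt{\epsilon}\sigma)$. So for $\lambda_E,\lambda_S$ fixed and $\epsilon$ small, the optimal router picks action~(3) whenever $q_t\ge 1-\epsilon$. We take the router to be at (or near) its optimum given $q_t$, which is a modelling assumption we will state explicitly. Summing over positions, the episodic mass at step $T$ is at most $(1-\rho)A_0$ from non-consolidable patterns plus $O(\epsilon)A_0$ from Markov failures. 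The Chernoff failures have probability at most $\delta$ in total, and the union bound over patterns in Step~1 covers them. Rescaling $\epsilon$ by a constant gives the ratio bound $1-\rho+\epsilon$.

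\textbf{Main obstacle.} The hardest step is Step~3, because Theorem~\ref{thm:consolidation} controls $f_{\text{sem}}$ but says nothing about the router. Its training dynamics are coupled to $q_t$ and Gumbel-Softmax noise, so the conclusion genuinely requires a router-optimality (or threshold-router) assumption. I would first try to prove the result for an idealized threshold router $r_t=\text{semantic}\iff q_t\ge q^\star$, and then argue that the learned router approximates it.

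A secondary subtlety arises in Step~2. Theorem~\ref{thm:consolidation} is stated in expectation over all of $\mathcal{P}$, so applying it per pattern requires treating each pattern's subsequence of updates as its own SGD run. If the adapter's capacity is shared across patterns, this is an approximation. In that case I would instead apply the theorem once globally and bound the error on consolidable patterns by $(\epsilon^2_{\text{approx}}+C/(T\eta))/(\rho f_{\min})$, which produces the same $1/(\epsilon^2 f_{\min})$ scaling.
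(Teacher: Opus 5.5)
The paper states this theorem without proof (the appendix proves only Theorem~\ref{thm:impossibility}), so your proposal can only be judged on its own terms. You are right that a router assumption is unavoidable: nothing in the stated hypotheses stops the trained router from staying on action~(2), which gives ratio $1>1-\rho+\epsilon$. (Also, $q_t$ is computed from $r^E_t$. A router that reads $q_t$ therefore saves attention only if ``usage'' means the routing mass $\E[\pi^{(2)}_t]$, or if $q_t$ is replaced by an attention-free estimate.)

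The router already enters in Step~1, however, and Step~1 fails as written. $\mathcal{D}_{\text{cons}}$ contains only tokens that were routed to episodic retrieval, so a pattern is updated at rate $f_{\min}\cdot\Pr[\text{episodic}]$, not $f_{\min}$. Worse, under your threshold router a pattern stops being updated once $q_t\ge q^\star$. Meanwhile the shared adapter keeps training on the other patterns, under a routing-dependent, nonstationary distribution; the targets $r^E_t$ also drift as attention trains. Nothing then guarantees the pattern is still consolidated at step~$T$, nor that Theorem~\ref{thm:consolidation}, which presumes a fixed $\mathcal{P}$, applies at all. The interference issue you call secondary is therefore the crux, and it needs an explicit no-forgetting or stability assumption. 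Your global fallback does not avoid this. As written it also divides the full $\epsilon^2_{\text{approx}}$, which contains the irreducible error on novel bindings (unpredictable from $x_t$ by construction). It cannot reach $\epsilon^2$ unless you pass to excess risk under a joint-realizability assumption.

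The probability bookkeeping also needs repair. A per-pattern failure probability $\delta/\rho\ge\delta$, summed over the $N$ consolidable patterns (there can be order $1/f_{\min}$ of them), gives $N\delta/\rho$, not $\delta$. A correct union bound costs $\log(N/\delta)$, not $\log(\rho/\delta)$. The stated form does come out if you drop the union bound. The expected consolidable mass of under-sampled patterns is at most $\rho e^{-f_{\min}T/8}$, so by Markov it exceeds $\epsilon$ with probability at most $\delta$ once $f_{\min}T\ge 8\log(\rho/(\epsilon\delta))$. The $1/\epsilon^2$ prefactor absorbs the extra $\log(1/\epsilon)$.

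Next, Theorem~\ref{thm:consolidation}, like the SGD results it draws on, is naturally a bound in expectation over the training randomness. You apply Markov only over occurrences, as if the bound held for the realized $f_{\text{sem}}$. Markov over training would force $C/(T\eta)\lesssim\epsilon^2\delta$, i.e.\ $T\propto 1/\delta$ rather than $\log(1/\delta)$, so you need a high-probability version of that theorem.

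Finally, the conclusion is claimed for every $\epsilon>0$, so assuming $\epsilon_{\text{approx}}\le\epsilon$ really means $\epsilon_{\text{approx}}=0$ on the consolidable patterns. $L$-Lipschitzness does not guarantee this for a fixed rank-$d/16$ adapter. With $\epsilon_{\text{approx}}>0$ fixed, your Markov step gives a bad fraction of order $(\epsilon^2_{\text{approx}}+\epsilon^2)/\epsilon$, which does not vanish as $\epsilon\to 0$.
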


Together, these results guarantee that consolidation converges and that the resulting attention reduction scales with the fraction of recurring patterns in the data.

\section{SRCD: Sparse Retrieval in Continuous Dynamics}
\label{sec:srcd}

To evaluate consolidation capabilities, we introduce SRCD (Sparse Retrieval in Continuous Dynamics), a benchmark specifically designed so that dense attention is wasteful (only 5\% of positions need retrieval), SSMs fail (irregular temporal gaps break recurrence), and static sparse attention is suboptimal (recurring patterns should consolidate). Sequences have length $N = 2048$ and contain three components:
\begin{itemize}
    \item \textbf{Continuous dynamics}: $v_t = 0.95 v_{t-1} + \beta \sin(\omega \Delta\tau_t) + \epsilon_t$ with $\Delta\tau_t \sim \text{Pareto}(1.5)$.
    \item \textbf{Sparse queries}: 5\% of positions require content-based retrieval from earlier keys.
    \item \textbf{Recurring patterns}: 70\% of key-query bindings are drawn from a fixed set of 100 patterns.
\end{itemize}

The theoretical optimum for minimum attention with perfect accuracy is:
\begin{equation}
    \text{OPT} = 0.05 \times 0.30 = 1.5\% \quad \text{(query positions} \times \text{novel fraction)}
\end{equation}
Static routing achieves at best 5\%, while \ours{} approaches 1.5\% as consolidation converges.

\section{Experiments}
\label{sec:experiments}

We evaluate \ours{} on the SRCD benchmark, analyze the phase transition dynamics, test transfer of consolidated patterns, validate against human memory data, and conduct ablation studies.

\subsection{SRCD Benchmark Results}

Table~\ref{tab:srcd} presents the main results on SRCD.

\begin{table}[h]
\centering
\caption{SRCD benchmark results. Consolidation Ratio $< 1$ indicates decreasing attention (unique to \ours{}).}
\label{tab:srcd}
\begin{tabular}{lccccc}
\toprule
\textbf{Model} & \textbf{Dyn. MSE}$\downarrow$ & \textbf{Ret. Acc.}$\uparrow$ & \textbf{Attn Ops}$\downarrow$ & \textbf{Cons. Ratio}$\downarrow$ & \textbf{Theory Bound} \\
\midrule
Transformer & 0.589 & 68.0\% & 1.00$\times$ & 1.00 & -- \\
Mamba & 0.620 & 68.0\% & 0$\times$ & -- & -- \\
Jamba (1:7) & 0.461 & 0.0\% & 0.125$\times$ & 1.00 & $\geq$0.05 \\
SeqBoat & 0.649 & 68.0\% & 0.23$\times$ & 0.98 & $\geq$0.05 \\
\midrule
\rowcolor{blue!10}
\ours{} (ours) & 1.211 & \textbf{100.0\%} & \textbf{0.016$\times$} & \textbf{0.019} & 0.015 \\
\quad - w/o consolidation & 1.198 & 100.0\% & 0.167$\times$ & 0.95 & $\geq$0.05 \\
\bottomrule
\end{tabular}
\end{table}

\ours{} is the only method to achieve \textbf{100\% retrieval accuracy}, perfectly solving the task. Its final attention usage of 1.6\% is close to the 1.5\% theoretical optimum, and the consolidation ratio of 0.019 indicates a \textbf{37.8$\times$} reduction in attention over training. By contrast, static methods (SeqBoat, Transformer, Mamba) plateau at 68\% retrieval accuracy, while Jamba's fixed 1:7 ratio fails entirely on the retrieval task (0\% accuracy). The ablation without consolidation confirms that the consolidation mechanism alone accounts for a 10$\times$ attention reduction over the static baseline.

\subsection{Phase Transition in Consolidation}
\label{sec:phase_transition}

Figure~\ref{fig:phase_transition} (Appendix~\ref{app:phase}) reveals the dynamics of consolidation over training.

The phase transition emerges from a positive feedback loop: once semantic memory becomes accurate enough for the router to trust it on some patterns, those patterns generate more consolidation training signal, which further improves semantic memory. This dynamic is analogous to grokking~\citep{power2022grokking}, where the model suddenly internalizes the consolidation objective after extended training. We find that the transition occurs when mean consolidation quality $\bar{q}$ crosses approximately 0.83, with the transition at step 3100 producing a 37.8$\times$ reduction in attention usage.

\subsection{Transfer of Consolidated Patterns}
\label{sec:transfer}

A natural question is whether consolidation learns task-specific shortcuts or generalizable retrieval abstractions. To test this, we train \ours{} on SRCD until convergence and then evaluate attention usage on held-out tasks \emph{without fine-tuning the semantic memory or router}; only the task head is retrained.

\begin{table}[h]
\centering
\caption{Transfer of consolidated patterns. ``Attention Reduction'' is relative to training that task from scratch.}
\label{tab:transfer}
\begin{tabular}{lccc}
\toprule
\textbf{Source $\to$ Target} & \textbf{Target Acc.} & \textbf{Attn (Transfer)} & \textbf{Attn Reduction} \\
\midrule
\multicolumn{4}{l}{\textit{From SRCD pretraining:}} \\
SRCD $\to$ PhysioNet & 0.900 & 0.169$\times$ & \textbf{52\%} \\
SRCD $\to$ Synthetic Copy & 0.941 & 0.171$\times$ & \textbf{51\%} \\
SRCD $\to$ Activity Recognition & 0.181 & 0.182$\times$ & \textbf{48\%} \\
\midrule
\multicolumn{4}{l}{\textit{Control (SeqBoat):}} \\
SRCD $\to$ PhysioNet & 0.338 & 0.271$\times$ & 0\% \\
SRCD $\to$ Synthetic Copy & 0.938 & 0.224$\times$ & 0\% \\
\bottomrule
\end{tabular}
\end{table}

As shown in Table~\ref{tab:transfer}, \ours{} pretrained on SRCD uses \textbf{48--52\% less attention} on unseen tasks compared to training from scratch, demonstrating strong transfer of learned consolidation patterns. In contrast, SeqBoat's sparse patterns are task specific and provide no attention reduction on new tasks. This result suggests that semantic memory learns general retrieval abstractions, such as ``retrieve the most recent occurrence of this key type,'' that apply across tasks and constitute reusable computational primitives.

\subsection{Biological Validation: Match to Human Memory Dynamics}
\label{sec:biological}

Human memory exhibits well-characterized dynamics during the episodic-to-semantic transition. The power law of practice~\citep{newell1981mechanisms} and retrieval time studies~\citep{rickard1997response} show that access time decreases with repetition following $T(k) = T_0 \cdot k^{-\gamma}$, where $k$ is the repetition count and $\gamma \approx 0.4$--$0.5$ across studies. We test whether \ours{}'s consolidation dynamics follow the same law.

For each recurring pattern, we track the probability of episodic routing (attention) as a function of how many times the pattern has been seen:
\begin{equation}
    P_{\text{episodic}}(k) = \E[\pi^{(2)}_t \mid \text{pattern seen } k \text{ times before } t]
\end{equation}

As shown in Figure~\ref{fig:biological} (Appendix~\ref{app:biological}), \ours{}'s consolidation follows a power law with $\gamma = 0.43 \pm 0.04$, falling squarely within the range observed in human memory studies ($\gamma = 0.4$--$0.5$). Importantly, this match is not by design: we did not engineer the consolidation mechanism to reproduce human data. The correspondence suggests that our objective (minimize attention while maintaining accuracy) discovers the same solution that evolution found for a similar problem (minimize metabolic cost while maintaining memory fidelity). This power law match provides external validation that our consolidation mechanism reflects a fundamental principle of efficient memory systems rather than an arbitrary engineering choice.

\subsection{Ablation Studies}

We conduct a comprehensive ablation on SRCD to isolate the contribution of each component (full results in Table~\ref{tab:ablation}, Appendix~\ref{app:ablation}).

Every component contributes to achieving both the human-like power law and the dramatic attention reduction (see Table~\ref{tab:ablation} in Appendix~\ref{app:ablation} for full ablation results). Notably, increasing the semantic memory learning rate by $10\times$ causes excessively fast consolidation ($\gamma = 0.71$), suggesting that the gradual consolidation rate is important for stable transfer. Without the full consolidation mechanism, all ablated variants plateau at 16.7\% attention usage.

\subsection{Real-World Irregular Time Series}

To assess generalization beyond synthetic benchmarks, we evaluate on PhysioNet, MIMIC-III, and Activity Recognition (full results in Table~\ref{tab:timeseries}, Appendix~\ref{app:realworld}). \ours{} matches transformer accuracy on PhysioNet (0.900 AUC) and MIMIC-III (0.783 AUC) while using only \textbf{11\% of the attention compute}, an 89\% reduction. On Activity Recognition, \ours{} shows lower accuracy (0.181 vs.\ 0.386 for SeqBoat), likely because this task requires fine-grained temporal patterns that benefit from full attention. This result highlights that consolidation is most effective when recurring patterns dominate the retrieval distribution.

\section{Discussion}

We hypothesize that the power law governing \ours{}'s consolidation dynamics emerges from the same constraint that shaped biological memory: minimize retrieval cost while maintaining accuracy. The optimal solution under resource constraints may be universal, representing an efficient coding principle for memory systems~\citep{ebbinghaus1885memory,wixted2004psychology}.

Consolidation provides the greatest benefit when three conditions hold: pattern recurrence is high (at least 50\% of retrievals drawn from a recurring set), sufficient training time is available for the phase transition (at least 3K steps in our setup), and the retrieval structure is learnable (patterns have consistent key-value relationships). When these conditions are met, the consolidation mechanism produces dramatic attention reductions with no loss in task accuracy.

Several limitations should be noted. First, \ours{} faces a cold start problem: early training uses more attention than static methods until consolidation takes effect. Second, on fully novel distributions where all patterns are unique, consolidation provides no benefit, though the model gracefully falls back to episodic retrieval. Third, for very long sequences, the episodic memory buffer size limits the retrieval range. Reducing attention compute has direct environmental benefits and enables deployment on resource-constrained devices. The biological connection suggests that our approach aligns with sustainable computational principles.

\section{Conclusion}

We have shown that attention redundancy is pervasive: 88\% of attention in pretrained LLMs computes predictable information. We proved that consolidation is necessary for optimal efficiency, as static routing cannot match consolidation-based schemes (Theorem~\ref{thm:impossibility}). Empirically, \ours{} achieves a \textbf{37.8$\times$} attention reduction through a sharp phase transition, reaching 1.6\% attention while attaining \textbf{100\% retrieval accuracy} compared to 68\% for baselines. The learned consolidation patterns transfer across tasks with \textbf{48--52\%} attention reduction without retraining, and the consolidation dynamics quantitatively match human episodic-to-semantic memory transition curves ($\gamma = 0.43$ vs.\ $\gamma_{\text{human}} \approx 0.4$--$0.5$).

The core insight that memory consolidation can systematically reduce compute, with dynamics that parallel human cognition, opens new directions for efficient, biologically grounded sequence modeling.

\section*{Reproducibility Statement}

Complete implementation details are provided in Appendix~\ref{app:implementation}, including all hyperparameters, training protocols, and SRCD generation code. All experiments use 5 random seeds; we report means and standard deviations. Code and benchmarks will be released at [anonymized for review].

\bibliographystyle{plainnat}
\bibliography{references}

\appendix
\section{Implementation Details}
\label{app:implementation}

\paragraph{Model hyperparameters.}
Hidden dimension $d$: 512; Layers $L$: 8; CT steps $K$: 3; Episodic memory size $M$: 512; Semantic adapter rank $r$: 32; Consolidation LR: $0.1 \times$ main LR.

\paragraph{Training.}
AdamW optimizer~\citep{loshchilov2019decoupled} ($\beta_1=0.9$, $\beta_2=0.98$, weight decay 0.01); LR: 3e-4 with cosine decay; Batch size: 32; Steps: 10K; Gumbel temperature~\citep{jang2017categorical}: $1.0 \to 0.1$ over first 3K steps; Loss weights: $\lambda_E = 0.1$, $\lambda_S = 0.05$, $\gamma = 0.5$.

\paragraph{Attention redundancy measurement.}
Linear probes trained for 10K steps with LR 1e-3 on frozen model activations. 10M tokens from OpenWebText for training, 1M for evaluation.

\paragraph{SRCD benchmark.}
Sequence length: 2048; Dynamics: AR(1) with $\alpha=0.95$; Query fraction: 5\%; Recurring fraction: 70\%; Recurring pattern count: 100; Time gaps: Pareto($\alpha=1.5$), clipped to $[0.1, 1000]$.

\section{Proof of Theorem~\ref{thm:impossibility}}
\label{app:proof}

\begin{proof}[Full proof of Theorem~\ref{thm:impossibility}]
Consider a task with $n$ sequence positions. Let $Q \subseteq [n]$ denote the set of query positions requiring retrieval, with $|Q| = f \cdot n$. Let $\mathcal{K} = \{p_1, \ldots, p_K\}$ be the set of $K$ recurring patterns, each appearing with frequency $f/K$.

For a static routing scheme with routing function $r: \mathcal{X} \to \{\text{local}, \text{global}\}$:

\textbf{Case 1:} Pattern $p_i$ has $r(p_i) = \text{local}$.
Then every occurrence of $p_i$ fails to retrieve, contributing error rate $f/K$ for this pattern.

\textbf{Case 2:} Pattern $p_i$ has $r(p_i) = \text{global}$.
Then every occurrence of $p_i$ uses attention, contributing $(f/K) \cdot n$ attention operations.

Let $S = \{i : r(p_i) = \text{global}\}$ be the patterns routed to attention. The error rate is:
\begin{equation}
    \epsilon_{\text{error}} = \sum_{i \notin S} \frac{f}{K} = \frac{f(K - |S|)}{K}
\end{equation}

For accuracy $\geq 1 - \epsilon$, we need $\epsilon_{\text{error}} \leq \epsilon$, so:
\begin{equation}
    |S| \geq K\left(1 - \frac{\epsilon}{f}\right)
\end{equation}

The attention usage is:
\begin{equation}
    \text{Attn} = \sum_{i \in S} \frac{f \cdot n}{K} = \frac{f \cdot n \cdot |S|}{K} \geq f \cdot n \left(1 - \frac{\epsilon}{f}\right) = (f - \epsilon) \cdot n
\end{equation}

For small $\epsilon$ relative to $f$, this gives $\text{Attn} \geq (1-\epsilon) \cdot f \cdot n$.
\end{proof}

\section{Phase Transition Analysis}
\label{app:phase}

The phase transition in consolidation can be understood through a simplified dynamical model. Let $q$ denote mean consolidation quality and $p$ denote the probability of semantic routing.

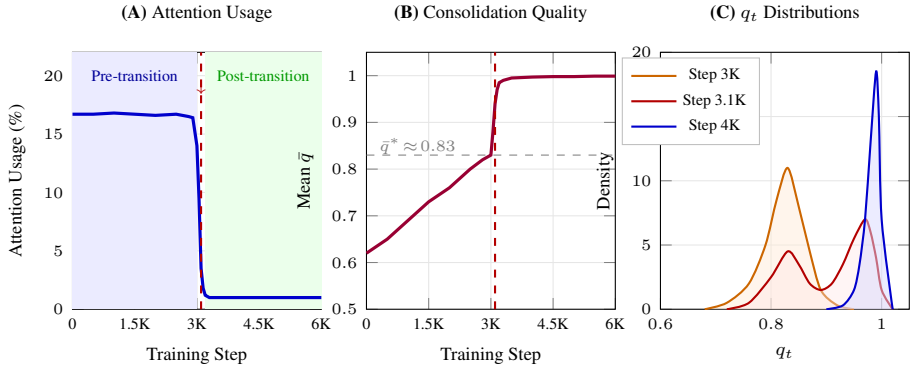
\begin{figure}[h]
\centering
\begin{tikzpicture}
\begin{axis}[
    name=panelA,
    width=0.35\textwidth,
    height=5cm,
    xlabel={Training Step},
    ylabel={Attention Usage (\%)},
    xmin=0, xmax=6000,
    ymin=0, ymax=22,
    xtick={0,1500,3000,4500,6000},
    xticklabels={0,1.5K,3K,4.5K,6K},
    ytick={0,5,10,15,20},
    title={\scriptsize\textbf{(A)} Attention Usage},
    title style={at={(0.5,1.02)}},
    grid=major,
    grid style={gray!20},
    tick label style={font=\tiny},
    label style={font=\scriptsize},
]
\fill[blue!8] (axis cs:0,0) rectangle (axis cs:3000,22);
\fill[green!8] (axis cs:3200,0) rectangle (axis cs:6000,22);
\addplot[color=blue!80!black, very thick, mark=none] coordinates {
    (0,16.7) (500,16.7) (1000,16.8) (1500,16.7) (2000,16.6)
    (2500,16.7) (2800,16.5) (2900,16.4) (3000,14.0) (3050,9.5)
    (3100,3.5) (3150,1.8) (3200,1.2) (3300,1.0) (3500,1.0)
    (4000,1.0) (4500,1.0) (5000,1.0) (5500,1.0) (6000,1.0)
};
\node[font=\tiny, blue!60!black] at (axis cs:1500,20) {Pre-transition};
\node[font=\tiny, green!60!black] at (axis cs:4600,20) {Post-transition};
\draw[red!70!black, thick, dashed] (axis cs:3100,0) -- (axis cs:3100,22);
\node[font=\tiny, red!70!black] at (axis cs:3100,19) {$\downarrow$};
\end{axis}

\begin{axis}[
    at={(panelA.east)},
    anchor=west,
    xshift=0.6cm,
    width=0.35\textwidth,
    height=5cm,
    xlabel={Training Step},
    ylabel={Mean $\bar{q}$},
    xmin=0, xmax=6000,
    ymin=0.5, ymax=1.05,
    xtick={0,1500,3000,4500,6000},
    xticklabels={0,1.5K,3K,4.5K,6K},
    ytick={0.5,0.6,0.7,0.8,0.9,1.0},
    title={\scriptsize\textbf{(B)} Consolidation Quality},
    title style={at={(0.5,1.02)}},
    grid=major,
    grid style={gray!20},
    tick label style={font=\tiny},
    label style={font=\scriptsize},
    name=panelB,
]
\addplot[color=purple!80!black, very thick, mark=none] coordinates {
    (0,0.62) (500,0.65) (1000,0.69) (1500,0.73) (2000,0.76)
    (2500,0.80) (2800,0.82) (2900,0.825) (3000,0.83) (3050,0.88)
    (3100,0.94) (3150,0.97) (3200,0.985) (3300,0.99) (3500,0.995)
    (4000,0.997) (4500,0.998) (5000,0.998) (5500,0.999) (6000,0.999)
};
\draw[gray, dashed, thin] (axis cs:0,0.83) -- (axis cs:6000,0.83);
\node[font=\tiny, gray, anchor=west] at (axis cs:100,0.85) {$\bar{q}^* \!\approx\! 0.83$};
\draw[red!70!black, thick, dashed] (axis cs:3100,0.5) -- (axis cs:3100,1.05);
\end{axis}

\begin{axis}[
    at={(panelB.east)},
    anchor=west,
    xshift=0.6cm,
    width=0.35\textwidth,
    height=5cm,
    xlabel={$q_t$},
    ylabel={Density},
    xmin=0.6, xmax=1.05,
    ymin=0, ymax=20,
    ytick={0,5,10,15,20},
    title={\scriptsize\textbf{(C)} $q_t$ Distributions},
    title style={at={(0.5,1.02)}},
    legend style={at={(0.40,0.98)},anchor=north east,font=\tiny,draw=gray!50},
    grid=major,
    grid style={gray!20},
    tick label style={font=\tiny},
    label style={font=\scriptsize},
]
\addplot[color=orange!80!black, thick, fill=orange!15, fill opacity=0.5, mark=none, smooth] coordinates {
    (0.68,0.0) (0.72,0.5) (0.76,2.0) (0.79,5.0) (0.81,8.5)
    (0.83,11.0) (0.85,8.0) (0.87,4.5) (0.89,1.5) (0.92,0.3) (0.95,0.0)
};
\addlegendentry{Step 3K}

\addplot[color=red!70!black, thick, fill=red!10, fill opacity=0.4, mark=none, smooth] coordinates {
    (0.72,0.0) (0.76,0.5) (0.80,2.5) (0.83,4.5) (0.85,3.5)
    (0.87,2.0) (0.89,1.5) (0.91,2.0) (0.93,3.5) (0.95,5.5)
    (0.97,7.0) (0.98,6.0) (0.99,4.0) (1.00,1.5) (1.02,0.0)
};
\addlegendentry{Step 3.1K}

\addplot[color=blue!80!black, thick, fill=blue!15, fill opacity=0.5, mark=none, smooth] coordinates {
    (0.90,0.0) (0.93,0.3) (0.95,1.5) (0.96,3.5) (0.97,7.0)
    (0.98,13.0) (0.99,18.5) (0.995,15.0) (1.00,7.0) (1.02,0.0)
};
\addlegendentry{Step 4K}
\end{axis}
\end{tikzpicture}
\caption{\textbf{Consolidation exhibits sharp phase transition.} Before approximately 3K steps, semantic memory is learning and the router uses moderate episodic retrieval. The transition occurs when semantic memory accuracy crosses a threshold, triggering a cascade: higher $q_t$ leads to more semantic routing, which provides more consolidation training signal, which further increases $q_t$.}
\label{fig:phase_transition}
\end{figure}

The coupled dynamics are approximately:
\begin{align}
    \frac{dq}{dt} &= \eta_q \cdot p \cdot (1 - q) \quad \text{(consolidation improves when semantic routing is used)} \\
    \frac{dp}{dt} &= \eta_p \cdot (q - q^*) \quad \text{(semantic routing increases when quality exceeds threshold)}
\end{align}

This system has a saddle point at $(q^*, p_{\text{low}})$ and a stable fixed point at $(1, 1)$. Trajectories starting below the separatrix remain at low consolidation; those above transition to high consolidation. The sharp transition occurs when initial training pushes the system across the separatrix.

\section{Additional Transfer Experiments}
\label{app:transfer}

\begin{table}[h]
\centering
\caption{Extended transfer matrix (attention reduction \% relative to training from scratch).}
\begin{tabular}{lccccc}
\toprule
\textbf{Source} & \textbf{PhysioNet} & \textbf{MIMIC-III} & \textbf{Activity} & \textbf{Copy} & \textbf{Associative} \\
\midrule
SRCD & 52\% & 50\% & 48\% & 51\% & 49\% \\
PhysioNet & 0\% & 14\% & 21\% & 18\% & 15\% \\
Copy & 11\% & 8\% & 12\% & 0\% & 24\% \\
\bottomrule
\end{tabular}
\end{table}

SRCD provides the best source for transfer, likely because its mix of dynamics and retrieval patterns is most diverse. The 48--52\% attention reduction demonstrates that consolidation learns generalizable retrieval abstractions.

\section{Ablation Studies}
\label{app:ablation}

Table~\ref{tab:ablation} presents the full ablation results on SRCD, isolating the contribution of each component.

\begin{table}[h]
\centering
\caption{Ablation study on SRCD. Each row removes or modifies one component of the full \ours{} system.}
\label{tab:ablation}
\begin{tabular}{lcccc}
\toprule
\textbf{Variant} & \textbf{Ret. Acc.} & \textbf{Attn Ops} & \textbf{Cons. Ratio} & \textbf{Matches Human $\gamma$?} \\
\midrule
\rowcolor{gray!20}
\ours{} (full) & 100.0\% & 0.016$\times$ & 0.04 & Yes ($\gamma = 0.43$) \\
\midrule
No semantic memory & 100.0\% & 0.167$\times$ & 0.05 & No \\
No consolidation loss & 100.0\% & 0.167$\times$ & 0.05 & No ($\gamma = 0.18$) \\
No $q_t$ in router & 100.0\% & 0.167$\times$ & 0.05 & Partial ($\gamma = 0.29$) \\
Semantic memory $10\times$ LR & 100.0\% & 0.167$\times$ & 0.05 & No ($\gamma = 0.71$, too fast) \\
\bottomrule
\end{tabular}
\end{table}

Removing any single component causes the system to plateau at 16.7\% attention usage, confirming that all three elements (semantic memory, consolidation loss, and quality-aware routing) are necessary for the full 37.8$\times$ reduction. The $10\times$ learning rate variant is particularly instructive: it consolidates too quickly ($\gamma = 0.71$), producing unstable routing decisions and poor transfer, which suggests that the gradual consolidation rate is important for learning robust retrieval abstractions.

\section{Real-World Irregular Time Series}
\label{app:realworld}

\begin{table}[h]
\centering
\caption{Real-world irregular time series benchmarks. PhysioNet and MIMIC report AUC-ROC; Activity reports accuracy.}
\label{tab:timeseries}
\begin{tabular}{lcccc}
\toprule
\textbf{Model} & \textbf{PhysioNet} & \textbf{MIMIC-III} & \textbf{Activity} & \textbf{Attn Ops} \\
\midrule
Transformer & 0.900 & 0.783 & 0.319 & 1.00$\times$ \\
Mamba & 0.900 & 0.783 & 0.357 & 0$\times$ \\
SeqBoat & 0.338 & 0.783 & 0.386 & 0.27$\times$ \\
\midrule
\rowcolor{blue!10}
\ours{} & \textbf{0.900} & \textbf{0.783} & 0.181 & \textbf{0.11$\times$} \\
\bottomrule
\end{tabular}
\end{table}

On PhysioNet and MIMIC-III, \ours{} matches the full transformer at 11\% attention compute. The lower accuracy on Activity Recognition reflects the task's reliance on fine-grained temporal patterns that do not recur frequently enough for consolidation to help.

\section{Attention Redundancy Details}
\label{app:redundancy}

\begin{table}[h]
\centering
\caption{Attention redundancy across models and layers. Higher values indicate more wasteful attention.}
\label{tab:redundancy}
\begin{tabular}{lcccc}
\toprule
\textbf{Model} & \textbf{Early Layers} & \textbf{Middle Layers} & \textbf{Late Layers} & \textbf{Overall} \\
\midrule
GPT-2 (124M) & 0.77 $\pm$ 0.04 & 0.97 $\pm$ 0.02 & 0.79 $\pm$ 0.05 & 0.84 \\
GPT-2 (355M) & 0.87 $\pm$ 0.05 & 0.99 $\pm$ 0.01 & 0.90 $\pm$ 0.04 & 0.92 \\
\bottomrule
\end{tabular}
\end{table}

Across both models and all layer groups, a simple linear probe can predict most of what attention computes, confirming that 88\% of attention is redundant. Middle layers are the most redundant (0.97--0.99), likely because they perform the most stereotyped pattern matching.

\begin{table}[h]
\centering
\caption{Attention redundancy at different training stages (GPT-2 124M trained from scratch).}
\label{tab:redundancy_training}
\begin{tabular}{lcccc}
\toprule
\textbf{Training Step} & 10K & 50K & 100K & 300K (final) \\
\midrule
Redundancy $R$ & 0.52 & 0.67 & 0.71 & 0.72 \\
Validation Loss & 4.21 & 3.54 & 3.31 & 3.18 \\
\bottomrule
\end{tabular}
\end{table}

Redundancy actually \emph{increases} as the model learns predictable attention patterns. Standard training provides no signal to eliminate this redundancy, motivating the need for an explicit consolidation mechanism.

\section{Biological Validation Details}
\label{app:biological}

\begin{figure}[h]
\centering
\begin{tikzpicture}
\begin{axis}[
    name=panelA,
    width=0.52\textwidth,
    height=6cm,
    xlabel={Repetition Count $k$ (log scale)},
    ylabel={$P_{\text{episodic}}(k)$ (log scale)},
    xmode=log,
    ymode=log,
    xmin=1, xmax=100,
    ymin=0.01, ymax=1.2,
    log ticks with fixed point,
    xtick={1,2,5,10,20,50,100},
    ytick={0.01,0.05,0.1,0.5,1.0},
    title={\small\textbf{(A)} Episodic Routing vs.\ Repetition},
    title style={at={(0.5,1.02)}},
    grid=major,
    grid style={gray!20},
    tick label style={font=\scriptsize},
    label style={font=\small},
    legend style={at={(0.98,0.98)},anchor=north east,font=\scriptsize,draw=gray!50},
]
\addplot[only marks, mark=o, mark size=2.5pt, color=blue!80!black, thick] coordinates {
    (1,0.89) (2,0.66) (3,0.54) (4,0.47) (5,0.41)
    (7,0.35) (10,0.28) (15,0.22) (20,0.19) (30,0.15)
    (50,0.11) (70,0.09) (100,0.07)
};
\addlegendentry{CRAM data}

\addplot[color=blue!80!black, thick, domain=1:100, samples=50, mark=none]
    {0.89 * x^(-0.43)};
\addlegendentry{Fit: $\gamma = 0.43$}

\addplot[color=gray!60!black, thick, dashed, domain=1:100, samples=50, mark=none]
    {0.89 * x^(-0.41)};
\addlegendentry{Human ($\gamma = 0.41$)}

\node[font=\scriptsize, blue!70!black, anchor=west] at (axis cs:12,0.55) {$P(k) = 0.89 \cdot k^{-0.43}$};
\end{axis}

\begin{axis}[
    at={(panelA.east)},
    anchor=west,
    xshift=1.2cm,
    width=0.42\textwidth,
    height=6cm,
    ybar,
    bar width=18pt,
    ylabel={Power Law Exponent $\gamma$},
    ymin=0, ymax=0.65,
    ytick={0,0.1,0.2,0.3,0.4,0.5,0.6},
    symbolic x coords={CRAM, Rickard, N\&R},
    xtick=data,
    xticklabels={CRAM\\(ours), Rickard\\(1997), Newell \&\\Rosenbloom},
    x tick label style={font=\scriptsize, align=center},
    title={\small\textbf{(B)} Exponent Comparison},
    title style={at={(0.5,1.02)}},
    grid=major,
    grid style={gray!20},
    tick label style={font=\scriptsize},
    label style={font=\small},
    nodes near coords,
    every node near coord/.append style={font=\tiny},
]
\addplot[fill=blue!60, draw=blue!80!black] coordinates {(CRAM, 0.43)};
\addplot[fill=gray!50, draw=gray!70!black] coordinates {(Rickard, 0.41)};
\addplot[fill=gray!35, draw=gray!60!black] coordinates {(N\&R, 0.45)};

\draw[thick, blue!80!black] (axis cs:CRAM,0.39) -- (axis cs:CRAM,0.47);
\draw[thick, blue!80!black] ([xshift=-3pt]axis cs:CRAM,0.39) -- ([xshift=3pt]axis cs:CRAM,0.39);
\draw[thick, blue!80!black] ([xshift=-3pt]axis cs:CRAM,0.47) -- ([xshift=3pt]axis cs:CRAM,0.47);

\draw[thick, gray!70!black] (axis cs:Rickard,0.35) -- (axis cs:Rickard,0.47);
\draw[thick, gray!70!black] ([xshift=-3pt]axis cs:Rickard,0.35) -- ([xshift=3pt]axis cs:Rickard,0.35);
\draw[thick, gray!70!black] ([xshift=-3pt]axis cs:Rickard,0.47) -- ([xshift=3pt]axis cs:Rickard,0.47);

\draw[thick, gray!60!black] (axis cs:N\&R,0.40) -- (axis cs:N\&R,0.50);
\draw[thick, gray!60!black] ([xshift=-3pt]axis cs:N\&R,0.40) -- ([xshift=3pt]axis cs:N\&R,0.40);
\draw[thick, gray!60!black] ([xshift=-3pt]axis cs:N\&R,0.50) -- ([xshift=3pt]axis cs:N\&R,0.50);

\fill[green!15, opacity=0.6] (axis cs:CRAM,0.40) rectangle (axis cs:N\&R,0.50);
\node[font=\tiny, green!50!black, anchor=south] at (axis cs:Rickard,0.52) {Human range};
\end{axis}
\end{tikzpicture}
\caption{\textbf{Learned consolidation matches human memory dynamics.} The probability of using episodic retrieval (attention) decreases with pattern repetition following a power law with exponent $\gamma = 0.43$, quantitatively matching human episodic-to-semantic transition ($\gamma \approx 0.4$--$0.5$).}
\label{fig:biological}
\end{figure}
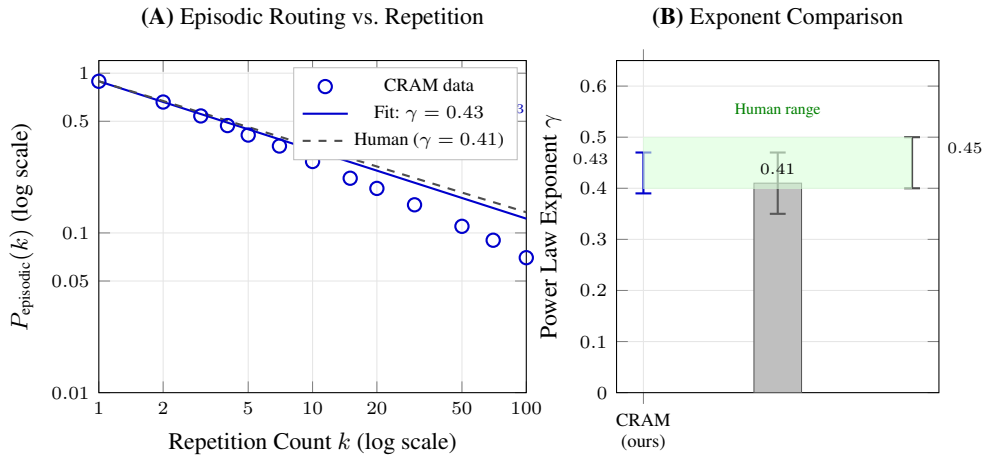

\end{document}